\newcommand{\IFTHENELSE}[3]{% \IfThenElse{<if>}{<then>}{<else>}
	\algorithmicif\ #1 \algorithmicthen\ #2 \algorithmicelse\ #3}
\newcommand{\SWITCH}[1]{\STATE \textbf{switch} #1 \textbf{do}}
\newcommand{\ENDSWITCH}{\STATE \textbf{end switch}}
\newcommand{\CASE}[1]{\STATE \textbf{case} #1\textbf{:} \begin{ALC@g}}
\newcommand{\ENDCASE}{\end{ALC@g}}
\newcommand{\DEFAULT}{\STATE \textbf{default:} \begin{ALC@g}}
\newcommand{\ENDDEFAULT}{\end{ALC@g}}
\newcommand{\DEFAULTLINE}[1]{\STATE \textbf{default:} }
\newcommand{\OMIT}[1]{}
\newcommand{\br}[1]{\left(#1\right)}
\newcommand{\cbr}[1]{\left\{#1 \right\}}
\newcommand{\innerprod}[1]{\left\langle #1 \right\rangle}
\newcommand{\pix}{\pi_x}	% permutation of x
\newcommand{\Pix}{\Pi_x}	% permutation matrix of x
\newcommand{\oneh}{\mathds{1}}	% hollow one
\newcommand{\Sn}{\mathbb{S}_n}	% permutation group
\newcommand{\RR}{\mathbb{R}}	% reals
\newcommand{\NN}{\mathbb{N}}	% integers
\newcommand{\Ical}[1]{\mathfrak{I}^{(#1)}}	% interval
\newcommand{\T}[1][\sigma]{\Pi_{#1}^{\otimes d}}	% tensor perm conv-d
\newcommand{\UU}{\mathcal{U}}	% tensor quantile
\newcommand{\BB}{\mathcal{B}}	% tensor coef
\newcommand{\tsorder}[1]{%
  {\everymath{\scriptstyle}%
   \overbrace{\scriptstyle n \times \cdots \times n}^{#1}}%
}
\newcommand{\tsprod}[1][d]{[1, \dots, #1; 1, \dots, #1]}
\newcommand{\tsprodalt}[1][d]{[#1, \dots, 2#1; 1, \dots, #1]}
\newcommand{\idx}[3][d]{#2_1 #3 \dots #3 #2_{#1}}		% \idx{i}{} separated by nothing or \idx{j}{,} separated by comma
\newcommand{\idpi}[3][d]{#3(#2_1), \dots, #3(#2_{#1})}	% \idpi{i}{\pix} or \idpi{j}{\piy}
\newcommand{\avg}{\textrm{avg}}
\newcommand{\add}{\textrm{add}}
\newcommand{\mult}{\textrm{mult}}
\newcommand{\topk}{@k}
\newcommand{\F}{\textrm{F}}
\newcommand{\vv}{\operatorname{vec}}
\newcommand{\tr}{\operatorname{tr}}
\newcommand{\pluseq}{\mathrel{ {+}{=} }}
\theoremstyle{plain}
\newtheorem{thm}{Theorem}
\newtheorem{lem}{Lemma}
\newtheorem{cor}{Corollary}
\theoremstyle{definition}
\theoremstyle{remark}
\newtheorem*{rmk}{Remark}
\begin{document}

\title{The Weighted Kendall and High-order Kernels for Permutations}
\author{Yunlong Jiao$^{1,2}$, Jean-Philippe Vert$^{3,4,5}$\\ \\
$^1$Wellcome Centre for Human Genetics, University of Oxford, Oxford OX3 7BN, UK\\
$^2$Department of Statistics, University of Oxford, Oxford OX1 3LB, UK\\
$^3$MINES ParisTech, PSL Research University,\\
CBIO -- Centre for Computational Biology, F-75006 Paris, France\\
$^4$Institut Curie, PSL Research University, INSERM, U900, F-75005 Paris, France\\
$^5$Ecole Normale Sup\'{e}rieure, Department of Mathematics and Applications, \\
CNRS, PSL Research University, F-75005 Paris, France\\\\
Emails: \href{mailto:yjiao@well.ox.ac.uk}{yjiao@well.ox.ac.uk}, \href{mailto:jean-philippe.vert@ens.fr}{jean-philippe.vert@ens.fr}
}
\date{}
\maketitle
%\tableofcontents

\begin{abstract}
We propose new positive definite kernels for permutations. First we introduce a weighted version of the Kendall kernel, which allows to weight unequally the contributions of different item pairs in the permutations depending on their ranks. Like the Kendall kernel, we show that the weighted version is invariant to relabeling of items and can be computed efficiently in $O(n \ln(n))$ operations, where $n$ is the number of items in the permutation. Second, we propose a supervised approach to learn the weights by jointly optimizing them with the function estimated by a kernel machine. Third, while the Kendall kernel considers pairwise comparison between items, we extend it by considering higher-order comparisons among tuples of items and show that the supervised approach of learning the weights can be systematically generalized to higher-order permutation kernels.
\end{abstract}

\section{Introduction}
A permutation is a 1-to-1 mapping from a finite set into itself, and allows to represent mathematically a complete ordering or $n$ items.
We consider the problem of machine learning when data are permutations, which has many applications such as analyzing preferences or votes \citep{Diaconis1988Group, Marden1996Analyzing}, tracking objects \cite{Huang2009Fourier}, or learning robustly from high-dimensional biological data \cite{Geman2004Classifying,Lin2009ordering,Jiao2018Kendall}.

A promising direction to learn over permutations is to first \emph{embed} them to a vector space, i.e., to first represent each permutation $\pi$ by a vector $\Phi(\pi)\in\RR^d$, and then to learn a parametric linear or nonlinear model over $\Phi(\pi)$ using standard machine learning approaches. For example, \citet{Kondor2010Ranking} proposed an embedding to $\RR^{d}$ with $d=n!$, where $n$ is the number of items; however, in spite of computational tricks, this leads to algorithms with $O(n^n)$ complexity which become impractical as soon as we consider more than a few items. Recently, \citet{Jiao2018Kendall} showed that the well-known Kendall $\tau$ correlation defines implicitly an embedding in $d=O(n^2)$ dimensions, in which linear models can be learned with $O(n \ln(n))$ complexity thanks to the so-called kernel trick \cite{Schoelkopf2002Learning,Shawe-Taylor2004Kernel}; they showed promising results on gene expression classification where $n$ is a few thousands. In this paper, we propose to extend this work by tackling several limitations of the Kendall kernel.

First, the Kendall kernel compares two permutations by computing the number of pairs of items which are in the same order in both permutations. When $n$ is large, it may be more interesting to weight differently the contributions of different pairs, e.g., to focus more on the top-ranked items. Many weighted versions of Kendall's $\tau$ correlation have been proposed in the literature, as reviewed below in Section~\ref{sec:review}; however, to our knowledge, none of them is associated to a valid embedding of permutations in a vector space, like the original Kendall correlation. Here we propose such an extension, and show that it inherits the computational benefits from the Kendall kernel while allowing to weight differently the importance of items based on their rank.

Second, we discuss how to choose the position-dependent weights that define this embedding. We propose to see them as parameters of the model, and to optimize them jointly with other parameters during training. This is similar in spirit to the way different kernels are weighted and combined in multiple kernel learning \citep{Lanckriet2004Learning,Bach2004Multiple}, or more generally to the notion of representation learning which became central in recent years in conjunction with neural networks or linear models \citep{Mikolov2013Distributed,LeMorvan2017Supervised}. We show in particular that efficient alternative optimization schemes are possible in the resulting optimization problem.

Third, observing that the features defining the embedding associated to the Kendall kernel are based on pairwise comparisons only, we consider extensions to higher-order comparisons, e.g., the relative ranking among triplets or quadruplets of items. Such features are naturally captured by higher-order representations of the symmetric group \citep{Diaconis1988Group,Kondor2010Ranking}, and we show that both the definition of the embedding and the algorithms to learn the weights can be effortlessly extended to such higher-order scenarios.

We finally present preliminary experimental results highlighting the potential of these new embeddings. Code to reproduce all the experiments in the present paper is available at \url{https://github.com/YunlongJiao/weightedkendall}.

\section{The Kendall kernel}
Let us first fix some notations. Given an integer $n\in\NN$, a \emph{permutation} is a 1-to-1 mapping $\sigma: [1,n] \to [1,n]$ such that $\sigma(i)\neq \sigma(j)$ for $i\neq j$. We denote by $\Sn$ the set of all such permutations. A permutation $\sigma\in\Sn$ can for example represent the preference ordering of a user over $n$ items, in which case $\sigma(i)$ is the rank of item $i$ among all items (e.g., the preferred item is the item $i$ such that $\sigma(i)=1$, namely, $\sigma^{-1}(1)$). Endowed with the composition operation $(\sigma_1 \sigma_2)(i) = \sigma_1 (\sigma_2(i))$, $\Sn$ is a group called the \emph{symmetric group}, of cardinality $n!$. The identity permutation will be denoted $e$.

A positive definite (p.d.) kernel on $\Sn$ is a symmetric function $K:\Sn\times\Sn\rightarrow\RR$ that satisfies, for any $l\in\NN$, $\alpha\in\RR^l$ and $\br{\sigma_1,\ldots,\sigma_l}$ in $\Sn^l$,
$$
\sum_{i=1}^l \sum_{j=1}^l \alpha_i \alpha_j K(\sigma_i,\sigma_j) \geq 0 \,.
$$
Equivalently, a function $K:\Sn\times\Sn\rightarrow\RR$ is a p.d. kernel if and only if there exists a mapping $\Phi: \Sn\rightarrow \RR^d$ (for some $d\in\NN$) such that $K(\sigma,\sigma') = \Phi(\sigma)^\top \Phi(\sigma')$. For example, taking $d=n(n-1)$ and $\Phi_\tau(\sigma) = \br{\oneh_{\sigma(i) < \sigma(j)} }_{1 \leq i \neq j \leq n}$, we see that the \emph{number of concordant pairs} between two permutations is a p.d. kernel, i.e., $\forall \sigma,\sigma'\in\Sn$,
\begin{equation}\label{eq:kendall}
\begin{split}
K_\tau(\sigma,\sigma') & = \Phi_\tau(\sigma)^\top\Phi_\tau(\sigma') \\
& = \sum_{1 \leq i \neq j \leq n} \oneh_{\sigma(i) < \sigma(j)} \oneh_{\sigma'(i) < \sigma'(j)} \,.
\end{split}
\end{equation}
Up to constant shift and scaling (by taking $2 K_{\tau} / \binom{n}{2} - 1$), this is equivalent to the \emph{Kendall kernel} of \citet{Jiao2018Kendall}. To lighten notations, we will simply call $K_\tau$ the Kendall kernel in the rest of this paper.

Besides being p.d., the Kendall kernel has another interesting property: it is \emph{right-invariant}, in the sense that for any $\sigma,\sigma'$ and $\pi\in\Sn$, it holds that
$$
K_{\tau}\br{\sigma,\sigma'} = K_{\tau}\br{\sigma \pi,\sigma' \pi}\,.
$$
Right-invariance implies that the kernel does not change if we relabel the items to be ranked, which is a natural requirement in most ranking problems (e.g., when items are only presented in alphabetical order for no other particular reason). Note that any right-invariant kernel can be rewritten as, $\forall \sigma, \sigma' \in \Sn$,
\begin{equation}\label{eq:kappa}
K(\sigma,\sigma') = K(e,\sigma'\sigma^{-1}) =: \kappa(\sigma'\sigma^{-1})\,,
\end{equation}
where $\kappa:\Sn\rightarrow \RR$. Hence a right-invariant kernel is a semigroup kernel \citep{Berg1984Harmonic}, and we say that a function $\kappa:\Sn\rightarrow \RR$ is p.d. when the kernel $K$ defined by (\ref{eq:kappa}) is p.d. In particular, note that $K$ is symmetric if and only if $\kappa$ satisfies $\kappa(\sigma)=\kappa(\sigma^{-1})$ for any $\sigma\in\Sn$. For example, the p.d. function associated to the Kendall kernel (\ref{eq:kendall}) is:
$$
\kappa_\tau(\sigma) = \sum_{1 \leq i \neq j \leq n} \oneh_{i < j} \oneh_{\sigma(i) < \sigma(j)} = \sum_{1\leq i < j \leq n} \oneh_{\sigma(i) < \sigma(j)}\,.
$$
Denoting $\Ical{2} = \cbr{(i,j) \in [1,n]^2 \,:\, 1 \leq i < j \leq n}$, this can be rewritten as
\begin{equation}\label{eq:order2}
\kappa_\tau(\sigma) = \sum_{(i,j)\in\Ical{2}} \oneh_{\br{\sigma(i) , \sigma(j)} \in \Ical{2}}\,.
\end{equation}
This notation will be convenient in Section~\ref{sec:highorder} when we generalize Kendall to high-order kernels for permutations.

\section{Related work}\label{sec:review}
The Kendall kernel (\ref{eq:kendall}) compares two permutations by assessing how many pairs of items are ranked in the same order. In many cases, however, one may want to weight differently the contributions of different pairs, based for example on their rankings in both permutations. Typically in applications involving preference ranking and information retrieval, one may expect that the relative ordering of items ranked at the bottom of the list is usually less relevant than that at the top. For example, when participants are asked to express their preference over a list of predefined items, it is usually impractical for participants to accurately express their preference towards less preferable items. As another example from information retrieval, when we wish to compare two ranked lists of documents outputted by two search engines, differences towards the top usually matter more as those query results will be eventually presented to the user.

Many authors have proposed weighted versions of the Kendall's $\tau$ correlation coefficient. A classical one, for example, is the \emph{weighted Kendall $\tau$ statistics} studied by \citet{Shieh1998weighted}:
\begin{equation}\label{eq:shieh}
\tau_w(\sigma,\sigma') =  \sum_{1 \leq i \neq j \leq n} w(\sigma(i),\sigma(j)) \oneh_{\sigma(i) < \sigma(j)} \oneh_{\sigma'(i) < \sigma'(j)} \,,
\end{equation}
for some weight function $w:[1,n]^2\rightarrow \RR$. Typical examples of the weight function include
\begin{itemize}
	\item $w(i,j)=1/(j-1)$ if $j \geq 2$ and $0$ otherwise, which gives the \emph{average precision correlation coefficient} \citep{Yilmaz2008new}.
	\item Weights of the form $w(i,j)=w_iw_j$ for some vector $w\in\RR^n$ \citep{Kumar2010Generalized}.
\end{itemize}
While (\ref{eq:shieh}) looks like a weighted version of (\ref{eq:kendall}), it is not symmetric in $\sigma$ and $\sigma'$ (except if $w$ is constant) since the weight of a pair of items only depends on their rankings in $\sigma$, and in particular $\tau_w$ is not a p.d. kernel. To enforce symmetry, \citet{Vigna2015weighted} proposes a weighted correlation of the form
\begin{equation}\label{eq:vigna}
\begin{split}
\tau_w(\sigma,\sigma') =  \sum_{1 \leq i \neq j \leq n} & ( w( \sigma( i ), \sigma( j ) ) + w( \sigma'( i ), \sigma'( j ) ) ) \\
& \times \oneh_{\sigma(i) < \sigma(j)} \oneh_{\sigma'(i) < \sigma'(j)} \,,
\end{split}
\end{equation}
which is symmetric and right-invariant, i.e., is invariant by relabeling of the items. However, (\ref{eq:vigna}) is not a valid p.d. inner product. \citet{Kumar2010Generalized} proposes a weighted correlation of the form:
\begin{equation}\label{eq:kumar}
\begin{split}
& \tau_{w,p} (\sigma, \sigma') = \sum_{1 \leq i \neq j \leq n} w(\sigma(i),\sigma(j)) \\
& \times \frac{p_{\sigma(i)} - p_{\sigma'(i)}}{\sigma(i) - \sigma'(i)}  \frac{p_{\sigma(j)} - p_{\sigma'(j)}}{\sigma(j) - \sigma'(j)} \oneh_{\sigma(i) < \sigma(j)} \oneh_{\sigma'(i) < \sigma'(j)} \,,
\end{split}
\end{equation}
where $w:[1,n]^2\rightarrow \RR$ and $p\in\RR^n$. In particular, the additional set of weights $p$ in (\ref{eq:kumar}) is motivated by cumulatively weighting the cost of swaps between adjacent positions needed to transform $\sigma$ into $\sigma'$. Like (\ref{eq:shieh}), $\tau_{w,p}$ is not symmetric hence not p.d. \citet{Farnoud2014axiomatic} notices that Kendall's $\tau$ induces a Euclidean metric which is the shortest path distance over a Cayley's graph (i.e., the smallest number of swaps between adjacent positions to transform a permutation into another), and proposes to set weights on the edges (i.e., on swaps between adjacent positions) in order to define the shortest path over the weighted graph as a new metric; although a valid metric, it does not induce a valid p.d. kernel in general. In summary, to our knowledge, no existing weighted variant of the Kendall kernel is p.d. and right-invariant.

\section{The weighted Kendall kernel}\label{sec:wken}
The following result provides a generic way to construct a weighted Kendall kernel that is p.d. and right-invariant.
\begin{thm}\label{thm:pdk}
Let $W:\NN^2\times\NN^2\rightarrow\RR$ be a p.d. kernel on $\NN^2$. Then the function $K_W:\Sn\times\Sn\rightarrow\RR$ defined by
\begin{equation}\label{eq:kw}
\begin{split}
K_W(\sigma,\sigma') = \sum_{1 \leq i \neq j \leq n} & W\br{(\sigma(i),\sigma(j)),(\sigma'(i),\sigma'(j))} \\
& \times \oneh_{\sigma(i) < \sigma(j)} \oneh_{\sigma'(i) < \sigma'(j)}
\end{split}
\end{equation}
is a right-invariant p.d. kernel on $\Sn$.
\end{thm}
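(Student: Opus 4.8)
The plan is to establish the two claims separately. For positive definiteness, the key idea is to realize $K_W$ as a genuine inner product. Since $W$ is p.d. on $\NN^2$, there exist a Hilbert space $\mathcal{H}$ and a feature map $\psi:\NN^2\to\mathcal{H}$ with $W\br{(a,b),(c,d)} = \innerprod{\psi(a,b),\psi(c,d)}$. For each ordered pair $(i,j)$ with $i\neq j$, I would form the per-pair feature $\phi_{ij}(\sigma) = \oneh_{\sigma(i)<\sigma(j)}\,\psi\br{\sigma(i),\sigma(j)} \in \mathcal{H}$, and then stack these into the direct sum $\Phi_W(\sigma) = \bigoplus_{1\leq i\neq j\leq n}\phi_{ij}(\sigma) \in \mathcal{H}^{n(n-1)}$. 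Computing $\innerprod{\Phi_W(\sigma),\Phi_W(\sigma')}$ slot by slot, the two indicators multiply to $\oneh_{\sigma(i)<\sigma(j)}\oneh_{\sigma'(i)<\sigma'(j)}$ while the $\psi$-factors reproduce $W\br{(\sigma(i),\sigma(j)),(\sigma'(i),\sigma'(j))}$, so the sum of slot contributions is exactly $K_W(\sigma,\sigma')$ as written in (\ref{eq:kw}). Being an inner product, $K_W$ is p.d. Equivalently, one can argue block by block: each summand is the product of the rank-one p.d. kernel $\oneh_{\sigma(i)<\sigma(j)}\oneh_{\sigma'(i)<\sigma'(j)}$ with the pullback of $W$ under $\sigma\mapsto(\sigma(i),\sigma(j))$; products of p.d. kernels are p.d. by the Schur product theorem, and sums of p.d. kernels are p.d.

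For right-invariance, I would expand $K_W(\sigma\pi,\sigma'\pi)$ using $(\sigma\pi)(i)=\sigma(\pi(i))$ and change variables $k=\pi(i)$, $l=\pi(j)$. Because $\pi$ is a bijection of $[1,n]$, as $(i,j)$ ranges over all ordered pairs with $i\neq j$ the pair $(k,l)$ ranges over exactly the same index set, so the reindexed sum is literally $K_W(\sigma,\sigma')$. Hence $K_W(\sigma\pi,\sigma'\pi)=K_W(\sigma,\sigma')$ for every $\pi\in\Sn$.

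I do not expect a serious obstacle: the only creative step is guessing the right feature map, namely bundling the feature map of $W$ together with the concordance indicator into one $\mathcal{H}$-valued slot per pair and summing over pairs. The one point to handle with mild care is that $\psi$ may be infinite-dimensional, so I would phrase the construction in an abstract Hilbert space $\mathcal{H}$ rather than in $\RR^m$; once that is fixed, both claims reduce to bookkeeping.
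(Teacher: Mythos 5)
Your proof is correct and takes essentially the same approach the paper sketches: it writes $W$ explicitly as an inner product via a feature map $\psi$, bundles $\psi$ with the concordance indicator into a per-pair feature, and deduces that $K_W$ is itself an inner product, with right-invariance following from reindexing the sum by the bijection $\pi$. The paper dismisses these details as ``easy to prove''; your write-up simply supplies them (and your Schur-product alternative is an equally valid variant of the same idea).
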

Theorem~\ref{thm:pdk} is easy to prove by writing explicitly $W$ as an inner product, and deducing from (\ref{eq:kw}) that $K_W$ can also be written as an inner product. Note that $K_W$ is always right-invariant whether or not $W$ is p.d. Note also that there may exist non p.d. weight $W$ leading to p.d. kernels $K_W$; we leave it as an open question to characterize the necessary conditions on the weight function $W$ such that $K_W$ is p.d.

Let us now consider particular cases of the weighted Kendall kernel of the following form:
\begin{cor}\label{cor:pdk}
Let the weights in Theorem~\ref{thm:pdk} take the form $W\br{(a,b),(c,d)} = U_{ab} U_{cd}$ for some matrix $U\in\RR^{n\times n}$, then
\begin{equation}\label{eq:ku}
\begin{split}
K_U(\sigma,\sigma') = \sum_{1 \leq i \neq j \leq n} & U_{\sigma(i),\sigma(j)} U_{\sigma'(i),\sigma'(j)} \\
& \times \oneh_{\sigma(i) < \sigma(j)} \oneh_{\sigma'(i) < \sigma'(j)}
\end{split}
\end{equation}
is a right-invariant p.d. kernel on $\Sn$.
\end{cor}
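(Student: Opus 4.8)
The plan is to recognize Corollary~\ref{cor:pdk} as an immediate specialization of Theorem~\ref{thm:pdk}, so that essentially the only thing left to check is that the particular weight function $W\br{(a,b),(c,d)} = U_{ab}U_{cd}$ is a p.d.\ kernel on $\NN^2$; everything else then follows for free. The key observation is that this $W$ has a rank-one product structure: defining the scalar feature map $\phi:\NN^2\rightarrow\RR$ by $\phi(a,b) = U_{ab}$ (reading off the entry of $U$ indexed by the pair), we have $W\br{(a,b),(c,d)} = \phi(a,b)\,\phi(c,d)$. For any $l\in\NN$, any $\alpha\in\RR^l$, and any points $(a_1,b_1),\dots,(a_l,b_l)$, the associated quadratic form is then
\[
\sum_{s=1}^l\sum_{t=1}^l \alpha_s\alpha_t\, \phi(a_s,b_s)\,\phi(a_t,b_t) = \br{\sum_{s=1}^l \alpha_s\, \phi(a_s,b_s)}^2 \geq 0\,,
\]
so $W$ is p.d. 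Substituting this $W$ into (\ref{eq:kw}) recovers exactly (\ref{eq:ku}), i.e.\ $K_U = K_W$.

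With this in hand, I would simply invoke Theorem~\ref{thm:pdk}, which guarantees that for any p.d.\ weight $W$ the kernel $K_W$ is both right-invariant and p.d. Since the specific $W$ here has just been shown to be p.d., it follows at once that $K_U$ is right-invariant and p.d., which is the claim.

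Alternatively, to make the embedding fully explicit, I would exhibit the feature map $\Phi_U:\Sn\rightarrow\RR^{n(n-1)}$ with coordinates $\br{U_{\sigma(i),\sigma(j)}\,\oneh_{\sigma(i)<\sigma(j)}}_{1\leq i\neq j\leq n}$ and verify $K_U(\sigma,\sigma') = \Phi_U(\sigma)^\top\Phi_U(\sigma')$ by expanding the inner product term by term; this yields positive definiteness directly, while right-invariance follows from the same relabeling-of-indices argument used for the unweighted Kendall kernel $K_\tau$. There is no real obstacle here: the entire content is the one-line verification that a product kernel is p.d. The only subtlety worth a remark is that $U$ is permitted to be an \emph{arbitrary} real matrix (neither symmetric nor itself p.d.), which causes no difficulty precisely because the product structure $\phi(x)\phi(y)$ is p.d.\ for any real-valued $\phi$ whatsoever.
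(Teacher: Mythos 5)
Your proposal is correct and matches the paper's own treatment: the paper (in the remark following Corollary~\ref{cor:pdk}) likewise views $W\br{(a,b),(c,d)} = U_{ab}U_{cd}$ as the rank-one case of a p.d.\ weight kernel and derives the corollary by direct specialization of Theorem~\ref{thm:pdk}. Your one-line verification that the product kernel $\phi(x)\phi(y)$ is p.d.\ (the quadratic form being a perfect square) is exactly the content of the paper's rank-one Gram-matrix factorization $W = UU^\top$, just stated pointwise.
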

\begin{rmk}
It is interesting to make explicit how Corollary~\ref{cor:pdk} is derived from Theorem~\ref{thm:pdk}. Let us fix the number of items to rank $n\in\NN$. The conditions in Theorem~\ref{thm:pdk} on $W$ being a kernel over $\NN^2$ can now be relaxed to being a kernel over $[1,n]^2$. With a slight abuse of notation, any p.d. kernel $W$ over finite $[1,n]^2$ is uniquely determined by its full Gram matrix $W$ of size $n^2 \times n^2$, which can always be decomposed into $W = UU^\top$ for some matrix $U$ of size $n^2 \times n^2$ due to matrix $W$ being s.p.d. In other words, this factorization-based definition $W = UU^\top$ is necessary and sufficient under the conditions set in Theorem~\ref{thm:pdk}. It is now easy to see that, if we further assume that Gram matrix $W$ has rank 1 such that $U$ is a vector of size $n^2$, i.e., a matrix of size $n \times n$, Theorem~\ref{thm:pdk} reduces to Corollary~\ref{cor:pdk}.
\end{rmk}

Notably in (\ref{eq:ku}), $U_{ab}$ should not be interpreted as a weight for items $a$ and $b$, but rather as a weight for (those items sent by a permutation to) positions $a$ and $b$. More precisely, if two items are ranked in the same relative order in both permutations, say positions $a<b$ in $\sigma$ and $c<d$ in $\sigma'$, then this pair contributes $U_{ab}U_{cd}$ to the kernel value. Note that if $U_{ij}$ is constant for any $(i,j)$, the weighted Kendall kernel (\ref{eq:ku}) reduces to the standard Kendall kernel (\ref{eq:kendall}).

While $U \in \RR^{n \times n}$ encodes the weights of pairs of positions, it is usually intuitive to start from individual positions. Suppose now we are given a vector $u \in \RR^n$ that encodes our belief of relevance of each position in $[1,n]$, some particularly interesting choices for $U \in \RR^{n \times n}$ are
\begin{itemize}
	\item \textbf{Top-$k$ weight:} For some predetermined $k\in[2,n]$, define $U_{ij} = 1$ if $i \leq k$ and $j \leq k$ and $0$ otherwise.
	\item \textbf{Additive weight:} Given some $u \in \RR^n$, define $U_{ij} = u_{i} + u_{j}$.
	\item \textbf{Multiplicative weight:} Given some $u \in \RR^n$, define $U_{ij} = u_{i} u_{j}$.
\end{itemize}
Each of these choices can be relevant in certain applications. For example, the top-$k$ weight compares two permutations using exclusively the top $k$ ranked items, and checking if they appear at the top of both permutations and in the same relative order. As another example, if one wishes to attenuate the importance of items when their rank increases, the additive or multiplicative weights can be a natural choice when combined with, for example, hyperbolic reduction factor $u_i = 1/(i+1)$ of the average precision correlation \citep{Yilmaz2008new}, or logarithmic reduction factor $u_i = 1/\log_{2}(i+1)$ of the discounted cumulative gain widely used in information retrieval \citep{Manning2008Introduction}. In particular, the top-$k$ weight can be seen as a multiplicative weight with a hard cutoff factor $u_i = 1$ if $i \leq k$ and $0$ otherwise. Figure~\ref{fig:plotweight} illustrates the aforementioned three different types of $u$.

\begin{figure}[t]
\begin{center}
\centerline{\includegraphics[width=0.8\columnwidth]{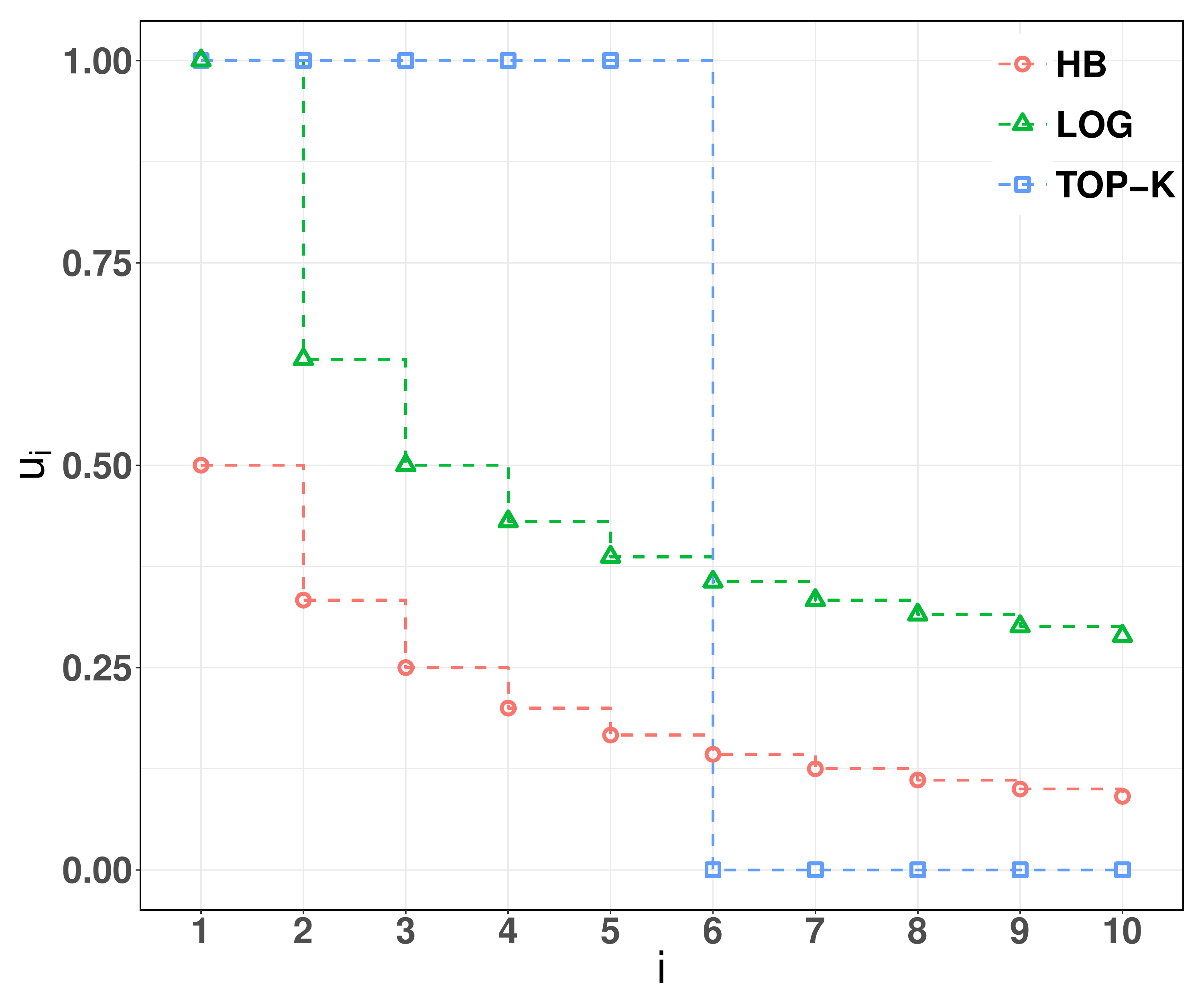}}
\caption{Illustration of hard cutoff for top-$k$ weight, and hyperbolic and logarithmic reduction factors for additive or multiplicative weight.}
\label{fig:plotweight}
\end{center}
\vskip -0.3in
\end{figure}

Note that the hyperparameter $k$ in the top-$k$ Kendall kernel needs to be determined in practice, which can be achieved by cross-validation for instance. A straightforward way to bypass this and construct a kernel agnostic to the particular choice of $k$ is to take the average of all top-$k$ Kendall kernels with $k$ ranging over $[1,n]$. Specifically, the \emph{top-$k$ Kendall kernel} is written
\begin{equation}\label{eq:topk}
\begin{split}
K_U^{\topk}(\sigma, \sigma') = \sum_{1 \leq i \neq j \leq n} & \oneh_{\sigma(i) \leq k} \oneh_{\sigma(j) \leq k} \oneh_{\sigma'(i) \leq k} \oneh_{\sigma'(j) \leq k} \\
& \times \oneh_{\sigma(i) < \sigma(j)} \oneh_{\sigma'(i) < \sigma'(j)} \,.
\end{split}
\end{equation}
The \emph{average Kendall kernel} is then derived as
\begin{equation}\label{eq:aveken}
\begin{split}
& K_W^{\avg}(\sigma, \sigma') := \frac{1}{n} \sum_{k=1}^n K_U^{\topk}(\sigma, \sigma') \\
& = \sum_{1 \leq i \neq j \leq n} \frac{1}{n} \min\cbr{\sigma(i), \sigma'(i)} \oneh_{\sigma(i) < \sigma(j)} \oneh_{\sigma'(i) < \sigma'(j)} \,.
\end{split}
\end{equation}
The positive definiteness of the average Kendall kernel is evident due to the fact that it is a sum of p.d. kernels, or we can verify that the average Kendall kernel is a weighted Kendall kernel of form (\ref{eq:kw}) whose weight function is indeed the $\min$ kernel \citep{Shawe-Taylor2004Kernel} thus satisfying the condition of Theorem~\ref{thm:pdk}. However, we see that the average Kendall kernel is no longer of the special form (\ref{eq:ku}) considered in Corollary~\ref{cor:pdk}.

\section{Fast computation}
Despite being a sum of $O(n^2)$ terms, the Kendall kernel (\ref{eq:kendall}) can be computed efficiently in $O(n \ln(n))$. For example, \citet{Knight1966computer} proposed such an algorithm by adapting a merge sort algorithm in order to count the inversion number of any permutation. While the general weighted Kendall kernel (\ref{eq:kw}) does not enjoy comparable computational efficiency in general, it does for certain choices discussed in Section~\ref{sec:wken}.
\begin{thm}\label{thm:comput}
The weighted Kendall kernel (\ref{eq:ku}) can be computed in $O(n \ln(n))$ for the top-$k$, additive or multiplicative weights. Besides, the average Kendall kernel (\ref{eq:aveken}) can also also be computed in $O(n \ln(n))$.
\end{thm}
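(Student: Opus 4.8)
The plan is to exploit a structure common to all the weights at hand: once restricted to a \emph{concordant} ordered pair $(i,j)$ (one with $\sigma(i)<\sigma(j)$ and $\sigma'(i)<\sigma'(j)$), the summand of (\ref{eq:ku}) or (\ref{eq:aveken}) decomposes into quantities attached to the single items $i$ and $j$. This turns the $O(n^2)$ sum into a handful of \emph{weighted two-dimensional dominance sums}, each of which can be evaluated in $O(n\ln n)$ by augmenting the merge-sort inversion counter of \citet{Knight1966computer} (equivalently, a Fenwick-tree sweep) with a few running partial sums. By right-invariance (Theorem~\ref{thm:pdk}) it is convenient to sort items by their $\sigma$-rank, after which a concordant pair is precisely a non-inversion $i<j,\ \tau(i)<\tau(j)$ of the relative permutation $\tau=\sigma'\sigma^{-1}$; the general case merely relabels items and leaves the complexity unchanged.

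For the multiplicative weight $U_{ij}=u_iu_j$ the contribution of a concordant pair factorizes as $u_{\sigma(i)}u_{\sigma'(i)}\cdot u_{\sigma(j)}u_{\sigma'(j)}=a_ia_j$, with $a_i:=u_{\sigma(i)}u_{\sigma'(i)}$ a weight carried by item $i$ alone, so that $K_U=\sum_{(i,j):\,i\text{ above }j}a_ia_j$ where ``$i$ above $j$'' means $i$ dominates $j$ in both orderings. Writing $c_i^+$ (resp.\ $c_i^-$) for the number of items dominated by $i$ (resp.\ dominating $i$), there are two routes to $O(n\ln n)$: either compute the total $\tfrac12\big[(\sum_i a_i)^2-\sum_i a_i^2\big]$ in $O(n)$ and subtract the weighted discordant mass $\sum_{\{i,j\}\text{ disc}}a_ia_j$, or compute directly, for every $i$, the partial sum $\sum_{j\text{ dominated by }i}a_j$. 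In both cases the key point is that, inside a merge step, when one element crosses a contiguous block of the other half all the crossing pairs are settled \emph{in aggregate} through a maintained running sum of the $a_j$, keeping the per-operation cost $O(1)$. The top-$k$ weight is the special case $u_i=\oneh_{i\le k}$ and is covered verbatim.

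The additive weight $U_{ij}=u_i+u_j$ is the only genuinely delicate case. Writing $p_i:=u_{\sigma(i)}$ and $q_i:=u_{\sigma'(i)}$, the contribution of a concordant pair expands as $(p_i+p_j)(q_i+q_j)=(p_iq_i+p_jq_j)+(p_iq_j+p_jq_i)$. Summed over all dominance pairs, the diagonal part contributes $\sum_i p_iq_i\,(c_i^++c_i^-)$, while the cross part contributes $\sum_i p_i\big(\sum_{j\text{ dominated by }i}q_j\big)+\sum_i q_i\big(\sum_{j\text{ dominated by }i}p_j\big)$. I therefore need, simultaneously for all items, the dominance counts $c_i^+,c_i^-$ together with the dominance-weighted partial sums of the $p_j$ and of the $q_j$; each of these is a weighted dominance quantity delivered by the same augmented sweep, so the overall cost remains $O(n\ln n)$. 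The obstacle to watch is exactly this per-item accounting: handling the cross terms by crediting each dominated item individually would cost $O(n^2)$, and the whole point is to batch every block crossing into an $O(1)$ update of a running sum (resp.\ an $O(\ln n)$ Fenwick operation) rather than touching the crossed items one by one.

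Finally, for the average Kendall kernel (\ref{eq:aveken}) the per-pair weight $\tfrac1n\min\cbr{\sigma(i),\sigma'(i)}$ depends only on the top item $i$ of the concordant pair, so $K_W^{\avg}(\sigma,\sigma')=\tfrac1n\sum_i\min\cbr{\sigma(i),\sigma'(i)}\,c_i^+$, an $O(n)$ combination of the dominance counts $c_i^+$ already produced by one $O(n\ln n)$ sweep. Collecting the four cases completes the argument; throughout, the sole non-routine ingredient is the bookkeeping that keeps every crossing of item blocks an $O(1)$ (amortized) update instead of an $O(n)$ one.
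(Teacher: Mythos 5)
Your proposal is correct and follows essentially the same route as the paper's proof. Both arguments reduce the kernel, via right-invariance, to a weighted non-inversion count of $\pi=\sigma'\sigma^{-1}$, expand the per-pair weight into per-item quantities (your factorization $a_i=u_iu_{\pi(i)}$ in the multiplicative case; your expansion of $(p_i+p_j)(q_i+q_j)$ into diagonal and cross terms in the additive case), and then batch all crossing pairs during a sort-based sweep using running partial sums: your running sums of $1$, $p_j$, $q_j$ and $p_jq_j$ are precisely the counters $cnum$, $cwa$, $cwb$, $cww$ maintained in Algorithm~\ref{alg:comput}, your treatment of the top-$k$ weight as the multiplicative weight with $u_i=\oneh_{i\leq k}$ is the observation already made in Section~\ref{sec:wken}, and your per-item dominance bookkeeping for the average kernel plays the role of its $cmin$ counter. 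The only substantive difference is the sorting backbone: the paper partitions recursively in quicksort style and accordingly claims only \emph{average-case} $O(n\ln n)$ complexity, whereas your merge-sort/Fenwick-tree sweep (the natural augmentation of \citet{Knight1966computer}, which the paper cites for the unweighted kernel) yields a deterministic worst-case $O(n\ln n)$ bound --- a marginally stronger guarantee obtained at no extra cost.
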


\begin{proof}
The proof is constructive and the algorithm is summarized in Algorithm~\ref{alg:comput}. C++/R implementation available in the package \texttt{kernrank} at \url{https://github.com/YunlongJiao/kernrank}.

The algorithm can be decomposed into three parts. First, we compute $\pi := \sigma'\sigma^{-1}$ by carrying out the inverse and composition of permutations, which can be done in linear time (Line 1). Due to the right-invariance of any concerning kernel, we have $K(\sigma,\sigma') = \kappa(\pi)$ where $\kappa$ is the corresponding p.d. function:
\begin{equation*}
\begin{split}
\kappa_U^{\topk}(\pi) & = \sum_{1 \leq i < j \leq n} \oneh_{i \leq k} \oneh_{j \leq k} \oneh_{\pi(i) \leq k} \oneh_{\pi(j) \leq k} \oneh_{\pi(i) < \pi(j)} \,, \\
\kappa_U^{\add}(\pi) & = \sum_{1 \leq i < j \leq n} \br{u_i + u_j} \br{u_{\pi(i)} + u_{\pi(j)}} \oneh_{\pi(i) < \pi(j)} \,, \\
\kappa_U^{\mult}(\pi) & = \sum_{1\leq i < j \leq n} u_i u_j u_{\pi(i)} u_{\pi(j)} \oneh_{\pi(i) < \pi(j)} \,, \\
\kappa_W^{\avg}(\pi) & = \sum_{1\leq i < j \leq n} \frac{1}{n} \min\cbr{i, \pi(i)} \oneh_{\pi(i) < \pi(j)} \,.
\end{split}
\end{equation*}
Second, we register a global variable $s$ to record $\kappa(\pi)$ (Line 2) and implement $\kappa(\pi)$ in the function \textsc{QuickKappa} (Lines 3--36). Finally, $s$ is updated by calling the function \textsc{QuickKappa} (Line 37) and then outputted by the algorithm.

\begin{algorithm}[!p]
\caption{Top-$k$, average and weighted Kendall kernel with additive or multiplicative weight}
\label{alg:comput}
\begin{algorithmic}[1]
	\INPUT permutations $\sigma,\sigma'$, size $n$, $u$ for weighted Kendall kernel (optional), $k$ for top-$k$ Kendall kernel (optional)
	
	\STATE $\pi := \sigma' \sigma^{-1}$
	
	\STATE Initialize a global variable $s := 0$ then define
	\FUNCTION {\textsc{QuickKappa}$(indices)$}
		\IF {length of $indices > 1$}
			\STATE $pivot :=$ pick any element from $indices$
			\STATE $indhigh, indlow :=$ two empty arrays
			\STATE $cnum, ctop, cmin, cwa, cwb, cww := 0$
			\FOR {{\bfseries each} $i$ {\bfseries in} $indices$}
				\IF {$\pi(i) < \pi(pivot)$}
					\STATE Add $i$ to $indlow$
					\STATE $cnum \pluseq 1$
					\STATE $cmin \pluseq \min \{i, \pi(i)\}/n$
					\STATE $ctop \pluseq $ \IFTHENELSE {$i \leq k$ \AND $\pi(i) \leq k$} {$1$} {$0$}
					\STATE $cwa \pluseq u_i$
					\STATE $cwb \pluseq u_{\pi(i)}$
					\STATE $cww \pluseq u_i*u_{\pi(i)}$
				\ELSE
					\STATE Add $i$ to $indhigh$
					\SWITCH {type of weighted Kendall kernel}
						\CASE {STANDARD}
							\STATE $s \pluseq cnum$
						\ENDCASE
						\CASE {TOP-$k$}
							\STATE $s \pluseq $ \IFTHENELSE {$i \leq k$ \AND $\pi(i) \leq k$} {$ctop$} {$0$}
						\ENDCASE
						\CASE {AVERAGE}
							\STATE $s \pluseq cmin$
						\ENDCASE
						\CASE {ADDITIVE WEIGHT}
							\STATE $s \pluseq cww + cwa * u_{\pi(i)} + cwb * u_i + cnum * u_i * u_{\pi(i)}$
						\ENDCASE
						\CASE {MULTIPLICATIVE WEIGHT}
							\STATE $s \pluseq cww * u_i * u_{\pi(i)}$
						\ENDCASE
					\ENDSWITCH
				\ENDIF
			\ENDFOR
      \STATE \textsc{QuickKappa}$(indhigh)$
      \STATE \textsc{QuickKappa}$(indlow)$
		\ENDIF
	\ENDFUNCTION
	
	\STATE Call \textsc{QuickKappa}$([1,n])$ to update $s$
	
	\OUTPUT $K(\sigma,\sigma') = s$
\end{algorithmic}
\end{algorithm}

Central to the algorithm is the computation of $\kappa(\pi)$. It is based on an idea similar to a quicksort algorithm, where we recursively partition an array into two sub-arrays consisting of greater or smaller values according to a $pivot$, and cumulatively count the contributions between pairs of items with one in each sub-array. Specifically, suppose now $\pi$ is divided into two sub-arrays $\pi_{indhigh}$ and $\pi_{indlow}$ where ranks in $\pi_{indhigh}$ are all higher and those in $\pi_{indlow}$, now $\kappa(\pi)$ can be decomposed into
$$
\kappa(\pi) = \kappa(\pi_{indhigh}) + \kappa(\pi_{indlow}) + c(\pi_{indhigh}, \pi_{indlow}) \,,
$$
where $c$ characterizes the weighted non-inversion number of $\pi$ restricted on pairs of items with one in each sub-array. The computation of $c(\pi_{indhigh}, \pi_{indlow})$ depends on specific choice of weight and is depicted in the pseudo-code (Lines 19--30). Notably a single linear-time pass over $\pi$ is sufficient to compute $c(\pi_{indhigh}, \pi_{indlow})$. By the analysis of deduction typically for a quicksort algorithm, the overall time complexity of our algorithm is on average $O(n \ln (n))$.

In particular, recall that the standard Kendall kernel is merely a special case of the weighted Kendall kernel with constant weight, and hence our algorithm provides an alternative to the efficient algorithm based on merge sort proposed by \citet{Knight1966computer}. Notably, \citet{Vigna2015weighted} proposed a different fast algorithm based on the form of a weighted correlation (\ref{eq:vigna}), which applies to our additive and multiplicative cases as well.
\end{proof}

Theorem~\ref{thm:comput} shows that, just like the standard Kendall kernel, any of the weighted Kendall kernels concerned by the theorem benefits from the kernel trick and can be used efficiently by a kernel machine such as a SVM or kernel $k$-means.

\section{Learning the weights}\label{sec:learn}
So far, we have been focusing on studying the properties of the weighted Kendall kernel for which weights are given and fixed. In practice, it is usually not clear how to choose the weights so that the resulting kernel best suits the learning task at hand. We thus propose a systematic approach to \emph{learn} the weights in the context of supervised learning with discriminative models. More specifically, instead of first choosing \textit{a priori} the weights in the weighted Kendall kernel and then learning a function by a kernel machine, the weights can be learned jointly with the function estimated by the kernel machine.

We start by making explicit the feature embedding for permutations underlying the weighted Kendall kernel (\ref{eq:ku}). For any permutation $\sigma\in\Sn$, let $\Pi_{\sigma} \in \{0,1\}^{n \times n}$ be the permutation matrix of $\sigma \in \Sn$ defined by
$$
\br{\Pi_{\sigma}}_{ij} = \oneh_{i = \sigma(j)} \,.
$$
In fact, $\Pi: \Sn \to \RR^{n \times n}$ is the first-order permutation representation of $\Sn$ satisfying $\Pi_{\sigma}^\top = \Pi_{\sigma^{-1}}$ and $\Pi_{\sigma}\Pi_{\sigma'} = \Pi_{\sigma \sigma'}$ \citep{Diaconis1988Group}.
\begin{lem}\label{lem:phiu}
For any matrix of weights $U\in\RR^{n\times n}$, let $\Phi^U:\Sn \rightarrow \RR^{n\times n}$ be defined by
\begin{equation}\label{eq:phiu}
\begin{split}
\forall \sigma\in\Sn \,, \quad & \Phi^U(\sigma) = \Pi_\sigma^\top U \Pi_\sigma \\
\mbox{ or elementwise } & \br{\Phi^U(\sigma)}_{ij} = U_{\sigma(i),\sigma(j)} \,,
\end{split}
\end{equation}
and let kernel $G_U$ over $\Sn$ defined by
\begin{equation}\label{eq:gu}
\begin{split}
\forall \sigma,\sigma'\in\Sn \,, \quad & G_U(\sigma,\sigma') = \innerprod{\Phi^U(\sigma) , \Phi^U(\sigma')}_{\F} \\
& = \sum_{i,j=1}^n U_{\sigma(i),\sigma(j)} U_{\sigma'(i),\sigma'(j)} \,.
\end{split}
\end{equation}
where $\innerprod{\cdot,\cdot}_{\F}$ denotes the Frobenius inner product for matrices. In particular, if $U$ is upper (or lower) triangular with $U_{ij}=0$ for all $i \geq j$ (or $i \leq j$), then $G_U = K_U$.
\end{lem}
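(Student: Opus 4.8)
The plan is to reduce everything to the scalar expansion of the Frobenius inner product and then exploit the support pattern of a strictly triangular $U$. First I would record the elementwise identity underlying (\ref{eq:phiu}): since $\br{\Pi_\sigma}_{ij}=\oneh_{i=\sigma(j)}$, carrying out the two products in $\Pi_\sigma^\top U \Pi_\sigma$ yields $\br{\Phi^U(\sigma)}_{ij}=U_{\sigma(i),\sigma(j)}$, which is exactly what converts the matrix definition of $G_U$ into the scalar sum in (\ref{eq:gu}). I therefore start from
\[
G_U(\sigma,\sigma') = \sum_{i,j=1}^n U_{\sigma(i),\sigma(j)}\, U_{\sigma'(i),\sigma'(j)}\,,
\]
a sum over \emph{all} ordered pairs $(i,j)$, in contrast with $K_U$ in (\ref{eq:ku}), which ranges only over $i\neq j$ and carries the extra factor $\oneh_{\sigma(i)<\sigma(j)}\oneh_{\sigma'(i)<\sigma'(j)}$.

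Next I would split this double sum into its diagonal part ($i=j$) and its off-diagonal part ($i\neq j$), and assume $U$ strictly upper triangular, i.e. $U_{ab}=0$ whenever $a\geq b$. The diagonal contributes $\sum_i U_{\sigma(i),\sigma(i)}\,U_{\sigma'(i),\sigma'(i)}$, and each factor vanishes because its row and column indices coincide; hence the diagonal drops out and $G_U$ effectively ranges over $i\neq j$ only, matching the index set of $K_U$. For an off-diagonal term, $U_{\sigma(i),\sigma(j)}$ is nonzero only when $\sigma(i)<\sigma(j)$, and likewise $U_{\sigma'(i),\sigma'(j)}$ is nonzero only when $\sigma'(i)<\sigma'(j)$; so multiplying each term by $\oneh_{\sigma(i)<\sigma(j)}\oneh_{\sigma'(i)<\sigma'(j)}$ leaves it unchanged. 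Inserting these (redundant) indicators turns the surviving sum into precisely $K_U(\sigma,\sigma')$, giving $G_U=K_U$.

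The lower-triangular case ($U_{ab}=0$ for $a\leq b$) is handled by the mirror-image argument: the diagonal vanishes for the same reason, and a nonzero weight now forces the reversed inequality $\sigma(i)>\sigma(j)$, after which relabeling the symmetric dummy indices $i\leftrightarrow j$ in the off-diagonal sum brings it back to the form of $K_U$, the only change being that the two positions within each pair have swapped roles. I do not anticipate a genuine obstacle, since the statement is essentially a bookkeeping identity; the one point I would state carefully rather than assert is the \emph{double} role played by strict triangularity. It simultaneously annihilates the $i=j$ terms (so that the unrestricted index set of $G_U$ coincides with that of $K_U$) and renders the concordance indicators redundant (because the support of $U$ already encodes the constraint $\sigma(i)<\sigma(j)$). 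Making both of these explicit is the entire content of the proof.
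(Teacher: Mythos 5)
Your proof is correct, and it supplies exactly the verification the paper leaves implicit: Lemma~\ref{lem:phiu} is stated without proof, and the direct expansion of the Frobenius inner product, plus the two-fold use of strict triangularity (killing the diagonal terms and making the concordance indicators redundant), is the intended bookkeeping. One substantive remark on the lower-triangular case: your relabeling $i\leftrightarrow j$ shows that the surviving sum is the weighted Kendall kernel with weight matrix $U^\top$ --- the ``swapped roles'' you mention --- i.e., $G_U = K_{U^\top}$. Taken literally, the paper's claim $G_U = K_U$ fails there: if $U_{ab}=0$ for all $a\leq b$, every term of $K_U$ as defined in (\ref{eq:ku}) contains both the factor $U_{\sigma(i),\sigma(j)}$, which vanishes unless $\sigma(i)>\sigma(j)$, and the indicator $\oneh_{\sigma(i)<\sigma(j)}$, so $K_U\equiv 0$ while $G_U$ is generally nonzero. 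Your argument thus pinpoints a small imprecision in the statement itself; the correct conclusion for lower-triangular $U$ is $G_U = K_{U^\top}$, which is still a kernel of the form (\ref{eq:ku}) since $U^\top$ is strictly upper triangular, so the moral content of the lemma is unaffected. It would strengthen your write-up to say this explicitly rather than leave ``swapped roles'' as a gloss.
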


In the remainder of the paper, we will focus on $G_U$ (\ref{eq:gu}), which we call the \emph{weighted kernel} for permutations. Lemma~\ref{lem:phiu} shows that using the weighted kernel $G_U$ in a kernel machine amounts to learning in the feature space induced by the \emph{weighted embedding} $\Phi^U$. In the context of supervised learning with discriminative models, this reduces to fitting a linear model to data through embedding $\Phi^U$.
\begin{thm}\label{thm:suquan}
A general linear function on the weighted embedding $\Phi^U$ with coefficients $B\in\RR^{n\times n}$ can be written equivalently as
\begin{equation}\label{eq:hu}
\begin{split}
h^{U,B}(\sigma) & := \innerprod{B , \Phi^U(\sigma)}_{\F} \\ 
& = \innerprod{U , \Phi^B(\sigma^{-1})}_{\F} \\
& = \innerprod{ \vv(U) \otimes \br{\vv(B)}^\top , \Pi_\sigma \otimes \Pi_\sigma }_{\F} \,,
\end{split}
\end{equation}
where $\vv(\cdot)$ denotes the vectorization of a matrix, $\otimes$ denotes the Kronecker product for matrices or the outer product for vectors, wherever appropriate.
\end{thm}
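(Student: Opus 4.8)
The plan is to establish the two claimed identities in (\ref{eq:hu}) separately, each time starting from the defining expression $h^{U,B}(\sigma) = \innerprod{B, \Phi^U(\sigma)}_{\F}$ and rewriting it. Throughout I will use the elementwise description $\br{\Phi^U(\sigma)}_{ij} = U_{\sigma(i),\sigma(j)}$ from Lemma~\ref{lem:phiu}, the identity $\innerprod{A,C}_{\F} = \tr(A^\top C) = \vv(A)^\top \vv(C)$, and the relation $\Pi_\sigma^\top = \Pi_{\sigma^{-1}}$.

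For the first identity I would work directly at the level of entries. By definition,
\[
h^{U,B}(\sigma) = \sum_{i,j=1}^n B_{ij}\, U_{\sigma(i),\sigma(j)} \,.
\]
Since $\sigma$ is a bijection of $[1,n]$, I can reindex the sum by setting $a = \sigma(i)$ and $b = \sigma(j)$, so that $i = \sigma^{-1}(a)$ and $j = \sigma^{-1}(b)$. This turns the sum into $\sum_{a,b} U_{ab}\, B_{\sigma^{-1}(a),\sigma^{-1}(b)}$, and recognizing $B_{\sigma^{-1}(a),\sigma^{-1}(b)} = \br{\Phi^B(\sigma^{-1})}_{ab}$ gives exactly $\innerprod{U, \Phi^B(\sigma^{-1})}_{\F}$. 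Equivalently, one obtains the same identity by writing $\innerprod{B, \Pi_\sigma^\top U \Pi_\sigma}_{\F} = \tr(B^\top \Pi_\sigma^\top U \Pi_\sigma)$ and using cyclicity and transpose-invariance of the trace, together with $\Pi_\sigma^\top = \Pi_{\sigma^{-1}}$, to move the conjugation from $U$ onto $B$.

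For the second identity I would pass to vectorized form. Applying the vectorization rule $\vv(AXC) = (C^\top \otimes A)\,\vv(X)$ to $\Phi^U(\sigma) = \Pi_\sigma^\top U \Pi_\sigma$ yields $\vv\br{\Phi^U(\sigma)} = \br{\Pi_\sigma \otimes \Pi_\sigma}^\top \vv(U)$. Writing $b = \vv(B)$, $u = \vv(U)$ and $P = \Pi_\sigma \otimes \Pi_\sigma$, the inner product becomes the scalar $h^{U,B}(\sigma) = b^\top P^\top u = u^\top P b$. It then remains to recognize this scalar as a Frobenius inner product: since $\vv(U) \otimes \br{\vv(B)}^\top = u b^\top$ is precisely the outer product (an $n^2 \times n^2$ matrix), one has $\innerprod{u b^\top, P}_{\F} = \tr\br{b u^\top P} = u^\top P b$ by cyclicity, which is the third expression in (\ref{eq:hu}).

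The computation is essentially bookkeeping, so the only real obstacle is keeping the Kronecker-product and transpose conventions consistent: in particular, fixing the column-major convention for $\vv$ so that $\vv(AXC) = (C^\top \otimes A)\vv(X)$ applies with $A = \Pi_\sigma^\top$ and $C = \Pi_\sigma$, and checking that the resulting factor is $\br{\Pi_\sigma \otimes \Pi_\sigma}^\top$ rather than an untransposed or mis-ordered variant (using $(A\otimes C)^\top = A^\top \otimes C^\top$). Once the transposes are tracked carefully, aided by $\Pi_\sigma^\top = \Pi_{\sigma^{-1}}$ and the invariance of the final scalar under transposition, all three expressions coincide.
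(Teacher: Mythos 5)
Your proof is correct and takes essentially the same route as the paper's: the second equality by moving the conjugation by $\Pi_\sigma$ from $U$ onto $B$ (the paper does this via trace cyclicity and $\Pi_\sigma^\top = \Pi_{\sigma^{-1}}$, which you note as an alternative to your equivalent entrywise reindexing $a=\sigma(i)$, $b=\sigma(j)$), and the third equality via the vectorization identity $\vv(AXC) = \br{C^\top \otimes A}\vv(X)$. If anything, you make explicit a step the paper leaves implicit, namely reducing $\innerprod{\vv(U)\otimes\br{\vv(B)}^\top,\ \Pi_\sigma\otimes\Pi_\sigma}_{\F}$ to the scalar $\vv(U)^\top\br{\Pi_\sigma\otimes\Pi_\sigma}\vv(B)$ and matching it with $\vv(B)^\top\br{\Pi_\sigma\otimes\Pi_\sigma}^\top\vv(U)$.
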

\begin{proof}
The first equality is the definition of $h^{U,B}$. Plugging in (\ref{eq:phiu}) from Lemma~\ref{lem:phiu}, we have the second equality 
\begin{equation*}
\begin{split}
h^{U,B}(\sigma) & = \innerprod{B , \Phi^U(\sigma)}_{\F} \\
& = \tr\br{ B^\top \br{\Pi_\sigma^\top U \Pi_\sigma} } \\
& = \tr\br{ \br{\Pi_\sigma B \Pi_\sigma^\top}^\top U } \\
& = \tr\br{ \br{\Pi^\top_{\sigma^{-1}} B \Pi_{\sigma^{-1}}}^\top U } \\
& = \innerprod{U , \Phi^B(\sigma^{-1})}_{\F} \,.
\end{split}
\end{equation*}
The last equality follows from the relationship that $\vv(PXQ) = \br{Q^\top \otimes P} \vv(X)$ holds for any matrices $P,Q,X$. Note that the first two inner products are over $n\times n$ matrices, while the last one is over $n^2\times n^2$ matrices.
\end{proof}
\begin{rmk}
Theorem~\ref{thm:suquan} has several interesting implications. The first and second equalities show that the weights $U$ and linear coefficients $B$ are conceptually interchangeable. We can arbitrarily swap the roles of the coefficients $B$ of the linear model and the weights $U$ of the embedding, as long as we also change the representation of data from $\sigma$ to $\sigma^{-1}$. The last equality shows that a linear function with coefficients $B$ on an $n \times n$ dimensional embedding $\Phi^U(\sigma)$ is equivalent to a linear function with coefficients $\vv(U) \otimes \br{\vv(B)}^\top$ on an $n^2 \times n^2$ dimensional embedding $\Pi_\sigma \otimes \Pi_\sigma$.
\end{rmk}

Theorem~\ref{thm:suquan} suggests that, instead of first fixing some weights $U$ for the weighted kernel $G_U$ and then learning a function $B$ using this embedding, $U$ itself can be \emph{learned} jointly with $B$. Consequently, $\Phi^U$ with the learned weights $U$ is thus a data-driven discriminative feature embedding. Typically, fitting a kernel machine such as SVM is formulated as solving some optimization problem over $B$ (in case of fixed $G_U$). Now, we propose to optimize over both $U$ and $B$ by joint optimization, which amounts to a non-convex optimization problem over $\vv(U) \otimes \br{\vv(B)}^\top$ according to Theorem~\ref{thm:suquan}.

A commonly used approach to finding a stationary point of such optimization problems is by alternatively optimizing over $B$ for $U$ fixed, and over $U$ for $B$ fixed, until convergence or until a given number of iterations is reached. Interestingly, this can be easily implemented due to the conceptual interchangeability of $B$ and $U$ thanks to Theorem~\ref{thm:suquan}. Specifically, when $U$ is fixed, optimizing over $B$ amounts to training a kernel machine, with the kernel $G_U(\sigma,\sigma')$; in turn, for $B$ fixed, optimizing over $U$ also amounts to training a kernel machine, with the kernel $G_B(\sigma^{-1},(\sigma')^{-1})$ instead. Note that this naive alternating procedure implemented in kernel machines such as SVM, it will result in regularizing $B$ and $U$ in the same way, in the sense that the feasible region remain the same for both.

Alternatively, it is possible to bypass the need for alternative optimization of $B$ and $U$. Due to Theorem~\ref{thm:suquan}, when data are represented through $\Pi_\sigma \otimes \Pi_\sigma$, the linear coefficients $\vv(U) \otimes \br{\vv(B)}^\top$ can be directly learned by exerting some low-rank assumption on the linear model. This is similar to the \emph{supervised quantile normalisation} (SUQUAN) model proposed by \citet{LeMorvan2017Supervised}, who studied a similar optimization over a quantile distribution by learning a rank-1 linear model.

\section{High-order kernels for permutations}\label{sec:highorder}
While the Kendall kernel (\ref{eq:kendall}) relies only on pairwise comparison between items, it may be interesting in some applications to include high-order comparison among tuples beyond pairs~\cite{Diaconis1988Group}. In this section, we extend the Kendall kernel to higher-order kernels for permutations, establish their weighted variants and show that our systematic approach of learning the weights also applies to high-order cases.

Following the notation of (\ref{eq:order2}), let us denote for any $d \leq n$,
$$
\Ical{d} = \cbr{ (\idx{i}{,}) \in [1,n]^d \,:\, 1 \leq \idx{i}{<} \leq n} \,,
$$
and define the p.d. function associated with the order-$d$ kernel for permutations by
\begin{equation}\label{eq:orderd}
\kappa^{(d)}(\sigma) = \sum_{(\idx{i}{,}) \in \Ical{d}} \oneh_{ (\idpi{i}{\sigma}) \in \Ical{d} } \,.
\end{equation}
The \emph{order-$d$ kernel} is then given by the right-invariance, i.e.,
\begin{equation}\label{eq:pkd}
\begin{split}
& K^{(d)} (\sigma, \sigma') = \kappa^{(d)}(\sigma' \sigma^{-1}) \\ 
& = \sum_{\idx{i}{,}=1}^n \oneh_{(\idpi{i}{\sigma}) \in \Ical{d}}\oneh_{(\idpi{i}{\sigma'}) \in \Ical{d}} \,.
\end{split}
\end{equation}
The order-$d$ kernel compares two permutations by counting the number of $d$-tuples whose relative ordering is concordant. In particular, order-$2$ kernel reduces to the Kendall kernel (\ref{eq:kendall}), i.e., $K^{(2)} = K_\tau$.

Similarly to (\ref{eq:phiu}) and (\ref{eq:gu}), we define a \emph{weighted order-$d$ embedding} for permutations $\Phi^{\UU} : \Sn \to \RR^{\tsorder{d}}$ element-wise by
\begin{equation}\label{eq:phiu-tensor}
\forall \sigma\in\Sn \,, \quad \br{\Phi^{\UU}(\sigma)}_{\idx{i}{}} = \UU_{\idpi{i}{\sigma}} \,,
\end{equation}
where the weights $\UU \in \RR^{\tsorder{d}}$ is a order-$d$ (cubical) tensor of size $n$ at each dimension. Notably, given $\UU$, computation of $\Phi^{\UU}(\sigma)$ for any $\sigma \in \Sn$ can be done simply by permuting the entries in each dimension of the tensor $\UU$ by $\sigma$. The \emph{weighted order-$d$ kernel} is then defined as
\begin{equation*}
\begin{split}
\forall \sigma,\sigma'\in\Sn \,, \quad & G_{\UU}(\sigma,\sigma') = \innerprod{\Phi^{\UU}(\sigma) , \Phi^{\UU}(\sigma')}_{\F} \\
& = \sum_{\idx{i}{,}=1}^n \UU_{\idpi{i}{\sigma}} \UU_{\idpi{i}{\sigma'}} \,,
\end{split}
\end{equation*}
where $\innerprod{\cdot,\cdot}_{\F}$ denotes the Frobenius inner product for tensors. In particular, when $\UU$ is a matrix (order-2 tensor), the weighted order-$2$ kernel reduces to the weighted kernel (\ref{eq:gu}). Taking the special case ${\UU}_{\idx{i}{}} = \oneh_{\idx{i}{<}}$, the weighted order-$d$ kernel $G_{\UU}$ reduced to the standard order-$d$ kernel $K^{(d)}$ in (\ref{eq:pkd}).

Again, we would like to learn the weights $\UU$ adapted to data by jointly optimizing them with the function estimated by a kernel machine. Recall that, in the context of supervised learning with discriminative models, working with the weighted order-$d$ kernel $G_{\UU}$ for permutations with kernel machines amounts to fitting a linear model to data through the weighted order-$d$ embedding $\Phi^{\UU}$. In case of weighted high-order embedding and kernel, we can establish high-order counterpart of the results in Theorem~\ref{thm:suquan}.
\begin{thm}\label{thm:suquan-d}
A general linear function on the weighted order-$d$ embedding $\Phi^{\UU}$ with some order-$d$ tensor of linear coefficients $\BB\in\RR^{\tsorder{d}}$ can be written equivalently as
\begin{equation}\label{eq:hu-d}
\begin{split}
h^{\UU,\BB}(\sigma) & := \innerprod{\BB , \Phi^{\UU}(\sigma)}_{\F} \\ 
& = \innerprod{\UU , \Phi^{\BB}(\sigma^{-1})}_{\F} \\
& = \innerprod{ \UU \otimes \BB , \Pi_\sigma^{\otimes d} }_{\F} \,,
\end{split}
\end{equation}
where $\otimes$ denotes the outer product\footnote{With respect to standard index order (consecutively concatenating the dimensions of each tensor).} for tensors, $\Pi_\sigma^{\otimes d} \in (\{0,1\})^{\tsorder{2d}}$ is defined element-wise by
\begin{equation*}
\begin{split}
\br{ \Pi_\sigma^{\otimes d} }_{\idx{i}{} \idx{j}{}} & = \prod_{k=1}^d \br{\Pix}_{i_k j_k} \\
& = \oneh_{(\idx{i}{,}) = \pix((\idx{j}{,}))} \,.
\end{split}
\end{equation*}
\end{thm}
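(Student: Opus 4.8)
The plan is to prove the two non-trivial equalities in~(\ref{eq:hu-d}) by working directly with tensor entries, mirroring the trace manipulations of Theorem~\ref{thm:suquan} but one index at a time. The elementwise definition~(\ref{eq:phiu-tensor}) is the natural tool here, since the matrix conjugation form $\Phi^U(\sigma)=\Pi_\sigma^\top U \Pi_\sigma$ of Lemma~\ref{lem:phiu} does not transcribe to order-$d$ tensors as transparently as the coordinate formula does. The first equality of~(\ref{eq:hu-d}) is merely the definition of $h^{\UU,\BB}$, so all the work lies in the second and third.

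For the second equality, I would expand the Frobenius inner product in coordinates with~(\ref{eq:phiu-tensor}):
\[
\innerprod{\BB , \Phi^{\UU}(\sigma)}_{\F} = \sum_{i_1,\dots,i_d=1}^n \BB_{i_1\dots i_d}\,\UU_{\sigma(i_1),\dots,\sigma(i_d)}\,,
\]
and then substitute $j_k := \sigma(i_k)$, i.e.\ $i_k = \sigma^{-1}(j_k)$. Because $\sigma$ is a bijection of $[1,n]$, the ranges of the two index sets coincide, so summing over all $(i_1,\dots,i_d)$ is the same as summing over all $(j_1,\dots,j_d)$, and the sum rewrites as $\sum_{j_1,\dots,j_d} \UU_{j_1\dots j_d}\,\BB_{\sigma^{-1}(j_1),\dots,\sigma^{-1}(j_d)}$. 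By~(\ref{eq:phiu-tensor}) applied to the permutation $\sigma^{-1}$, this is exactly $\innerprod{\UU , \Phi^{\BB}(\sigma^{-1})}_{\F}$. This is the tensor analogue of the rearrangement $\tr(B^\top\Pi_\sigma^\top U \Pi_\sigma)=\tr((\Pi_{\sigma^{-1}}^\top B \Pi_{\sigma^{-1}})^\top U)$ in Theorem~\ref{thm:suquan}, and it establishes the interchangeability of $\UU$ and $\BB$.

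For the third equality, I would unfold both the outer product and the selection tensor. Fixing the index convention of the footnote, $(\UU\otimes\BB)_{i_1\dots i_d\,j_1\dots j_d}=\UU_{i_1\dots i_d}\,\BB_{j_1\dots j_d}$, so
\[
\innerprod{\UU\otimes\BB , \Pi_\sigma^{\otimes d}}_{\F} = \sum_{i_1,\dots,i_d=1}^n\ \sum_{j_1,\dots,j_d=1}^n \UU_{i_1\dots i_d}\,\BB_{j_1\dots j_d}\,\br{\Pi_\sigma^{\otimes d}}_{i_1\dots i_d\,j_1\dots j_d}\,.
\]
The crux is the factorization $\br{\Pi_\sigma^{\otimes d}}_{i_1\dots i_d\,j_1\dots j_d}=\prod_{k=1}^d (\Pi_\sigma)_{i_k j_k}=\prod_{k=1}^d \oneh_{i_k=\sigma(j_k)}$, already recorded in the statement, which acts as a Kronecker delta selecting, for each fixed $(j_1,\dots,j_d)$, the unique term with $i_k=\sigma(j_k)$. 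Collapsing the sum over the first $d$ indices leaves $\sum_{j_1,\dots,j_d} \UU_{\sigma(j_1),\dots,\sigma(j_d)}\,\BB_{j_1\dots j_d}$, which coincides with the coordinate form of $\innerprod{\BB , \Phi^{\UU}(\sigma)}_{\F}$ above; no $\vv(PXQ)$-type identity is needed.

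The only genuine care---and the main potential pitfall---is the bookkeeping of index ordering: one must align the $2d$ modes of $\UU\otimes\BB$ with those of $\Pi_\sigma^{\otimes d}$ so that $\UU$'s indices occupy exactly the output slots the permutation matrices map onto and $\BB$'s occupy the input slots, consistently with the footnote's ``consecutive concatenation'' convention. Once this alignment is fixed, the delta collapse is immediate, and the argument reduces to the same two manipulations---relabeling the summation index, and contracting against a permutation-selection tensor---that drive the order-$2$ proof.
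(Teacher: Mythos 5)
Your proof is correct and takes essentially the same route as the paper's: both expand the Frobenius inner products in coordinates, reindex the sum via the bijectivity of $\sigma$ to get the second equality, and obtain the third from the Kronecker-delta structure of $\Pi_\sigma^{\otimes d}$. The only difference is cosmetic — the paper wraps these coordinate manipulations in contracted-product notation and leaves the final delta-collapse implicit, whereas you spell it out entry-wise.
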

\begin{proof}
By definition, the weighted order-$d$ embedding $\Phi^{\UU} (\sigma)$ (\ref{eq:phiu-tensor}) is the contracted product \citep[Section 3.3]{Bader2006Algorithm} of two tensors $\Pi_\sigma^{\otimes d} \in (\{0,1\})^{\tsorder{2d}}$ and $\UU \in \RR^{\tsorder{d}}$ respectively at dimensions $\tsprod$ of both tensors, resulting in an order-$d$ tensor (inheriting the remaining indices of the first tensor). Specifically, we can write $\Phi^{\UU}$ in a compact representation
\begin{equation*}
\begin{split}
\Phi^{\UU} (\sigma) & = \innerprod{ \T , \UU }_{\tsprod} \\
& = \br{ \UU_{\idpi{i}{\sigma}} }_{\idx{i}{}} \in \RR^{\tsorder{d}} \,,
\end{split}
\end{equation*}
where $\innerprod{ \cdot, \cdot }_{\tsprod}$ denotes the contracted product respectively at dimensions $\tsprod$.

Linear function are now of the form:
\begin{equation*}
\begin{split}
h^{\UU, \BB} (\sigma) & = \innerprod{ \BB, \Phi^{\UU} (\sigma) }_{\F} \\
& = \innerprod{ \BB, \innerprod{ \T , \UU }_{\tsprod} }_{\F} \\
& = \sum_{\idx{i}{,} = 1}^n \BB_{\idx{i}{}} \UU_{\idpi{i}{\sigma}} \\
& = \sum_{\idx{i}{,} = 1}^n \UU_{\idx{i}{}} \BB_{\idpi{i}{\sigma^{-1}}} \\
& = \innerprod{ \UU, \innerprod{ \T , \BB }_{\tsprodalt} }_{\F} \\
& = \innerprod{ \UU, \Phi^{\BB} (\sigma^{-1}) }_{\F} \\
& = \innerprod{ \UU \otimes \BB , \T }_{\F} \, ,
\end{split}
\end{equation*}
where $\innerprod{ \cdot, \cdot }_{\F}$ denotes the Frobenius inner product for tensors, whereas $\innerprod{ \cdot, \cdot }_{\tsprodalt}$ denotes the contracted product respectively at dimensions $\tsprodalt$.
\end{proof}

Theorem~\ref{thm:suquan-d} together with Theorem~\ref{thm:suquan} ensures that there exists no substantial difference when we move from order-2 to order-$d$ embedding, or their underlying weighted kernel. Therefore, the discussion elaborated in Section~\ref{sec:learn} regarding Theorem~\ref{thm:suquan} can be seamlessly migrated to higher-order cases. For instance, as joint optimization over $\BB$ and $\UU$ is non-convex, alternative optimization can be employed to find a stationary point in practice. Thanks to the conceptual interchangeability between $\BB$ and $\UU$, alternating the optimization in $\BB$ and $\UU$ with a kernel machines amounts to simply changing the underlying kernel function accordingly. Note that the square unfolding studied by \citet[Definition 2.2]{Jiang2015New} of the order-$2d$ tensor $\Pi_\sigma^{\otimes d}$ is a matrix of dimension $n^d \times n^d$, which is exactly the $d$-fold Kronecker product of the matrix $\Pi_\sigma$ with itself. Therefore, we can still follow approaches developed by \citet{LeMorvan2017Supervised} in order to directly learn $\UU \otimes \BB$ by asserting low-rank assumptions on a linear model.

\section{Numerical experiments}\label{sec:experiment}
In this section, we demonstrate the use of the proposed weighted kernels compared with the standard Kendall kernel for classification on a real dataset from the European Union survey \textit{Eurobarometer 55.2} \citep{Christensen2010Eurobarometer}. As part of this survey, participants were asked to rank, according to their opinion, the importance of six sources of information regarding scientific developments: TV, radio, newspapers and magazines, scientific magazines, the internet, school/university. The dataset also includes demographic information of the participants such as gender, nationality or age. We removed all respondents who did not provide a complete ranking over all six sources, leaving a total of $12,216$ participants. Then, we split the dataset across age groups, where $5,985$ participants were 40 years old or younger, $6,231$ were over 40. The objective is to predict the age group of participants from their ranking of $n=6$ sources of news. In order to compare different kernels, we chose to fit kernel SVMs with different kernels.

In order to assess the classification performance, we randomly sub-sampled a training set of $400$ participants and a test set of $100$ participants. This random sub-sampling process was repeated $50$ times and we report the classification accuracy on $50$ test sets. Different types of weighted kernels considered are: the weighted Kendall kernels (\ref{eq:ku}) with top-$k$ weight for $k = 2,\dots,6$, additive weight for hyperbolic or logarithmic reduction factor, multiplicative weight for hyperbolic or logarithmic reduction factor, the average Kendall kernel (\ref{eq:aveken}), the weighted kernel (\ref{eq:gu}) with learned weights via alternative optimization or via SUQUAN-SVD, an SVD-based low-rank approximation algorithm proposed by \citet[Algorithm 1]{LeMorvan2017Supervised}. In particular, the top-$6$ Kendall kernel is equivalent to the standard Kendall kernel, which is seen as the baseline to benchmark with.

\begin{table}[t]
\caption{Classification accuracy across $50$ random experiments using kernel SVM with different kernels (ordered by decreasing mean accuracy). $p$-value indicates the significance level of a kernel \textit{better} than the standard Kendall kernel.}
\label{tab:acc}
\begin{center}
\begin{small}
\begin{sc}
\begin{tabular}{lcc}
\toprule
Type of weighted kernel & Mean $\pm$ SD & $p$-value \\
\midrule
average & 0.641 $\pm$ 0.058 & 0.01 \\
learned weight (svd) & 0.635 $\pm$ 0.051 & 0.07 \\
top-4 & 0.632 $\pm$ 0.053 & 0.31 \\
top-5 & 0.631 $\pm$ 0.050 & 0.20 \\
add weight (hb) & 0.628 $\pm$ 0.048 & 0.59 \\
standard (or top-6) & 0.627 $\pm$ 0.050 & --- \\
add weight (log) & 0.627 $\pm$ 0.050 & --- \\
learned weight (opt) & 0.626 $\pm$ 0.051 & --- \\
top-3 & 0.624 $\pm$ 0.044 & --- \\
mult weight (hb) & 0.623 $\pm$ 0.048 & --- \\
mult weight (log) & 0.618 $\pm$ 0.048 & --- \\
top-2 & 0.590 $\pm$ 0.051 & --- \\
\bottomrule
\end{tabular}
\end{sc}
\end{small}
\end{center}
\vskip -0.3in
\end{table}

Table~\ref{tab:acc} and Figure~\ref{fig:acc} summarize the classification accuracy across $50$ random experiments using kernel SVM with different types of weighted kernels. Results show that the average Kendall kernel is the best-performing kernel of all the kernels considered, followed by the weighted kernel with weights learned via SUQUAN-SVD. Since we observed that the standard deviation (SD) of accuracy scores is relatively large compared to the mean, we carried out paired Wilcoxon rank test to assess whether or not the better performance is significant where certain weighted kernels improved the standard Kendall kernel. The $p$-values are reported in the last column of Table~\ref{tab:acc}. In fact, the average Kendall outperformed the standard Kendall kernel at a significance of $0.01$ whereas the weighted kernel with weights learned via SUQUAN-SVD has a significance of $0.07$. However, compared to to SUQUAN-SVD, the weighted kernel with weights learned via alternative optimization is particularly not a good choice, which may be due to the fact that fully optimizing over the weights under a rather simple ridge-type regularization can overfit the data, leading to poor generalization. In conclusion, these results show that the weighted Kendall kernel can be promising for certain prediction tasks as they can outperform the standard one.

It is interesting to make a note on the performance of top-$k$ Kendall kernels varying the cutoff parameter $k$ and the average Kendall kernel (i.e. boxplots to the left of the first vertical dotted line in Figure~\ref{fig:acc}). When we move away from top-6 with all pairs of items included (or equivalently the baseline standard Kendall kernel), as $k$ decreases, the performance increases until peaking at $k = 3$, then as $k$ continues to decrease, the performance decreases until $k = 2$ with only the top ranked pair of items considered. This implies that choosing the number of top ranked items should be crucial in learning with rankings. Further, the performance can be drastically improved when we use the average Kendall kernel which crowdsourced all top-$k$ Kendall kernels.

\begin{figure}[t]
\begin{center}
\centerline{\includegraphics[width=0.8\columnwidth]{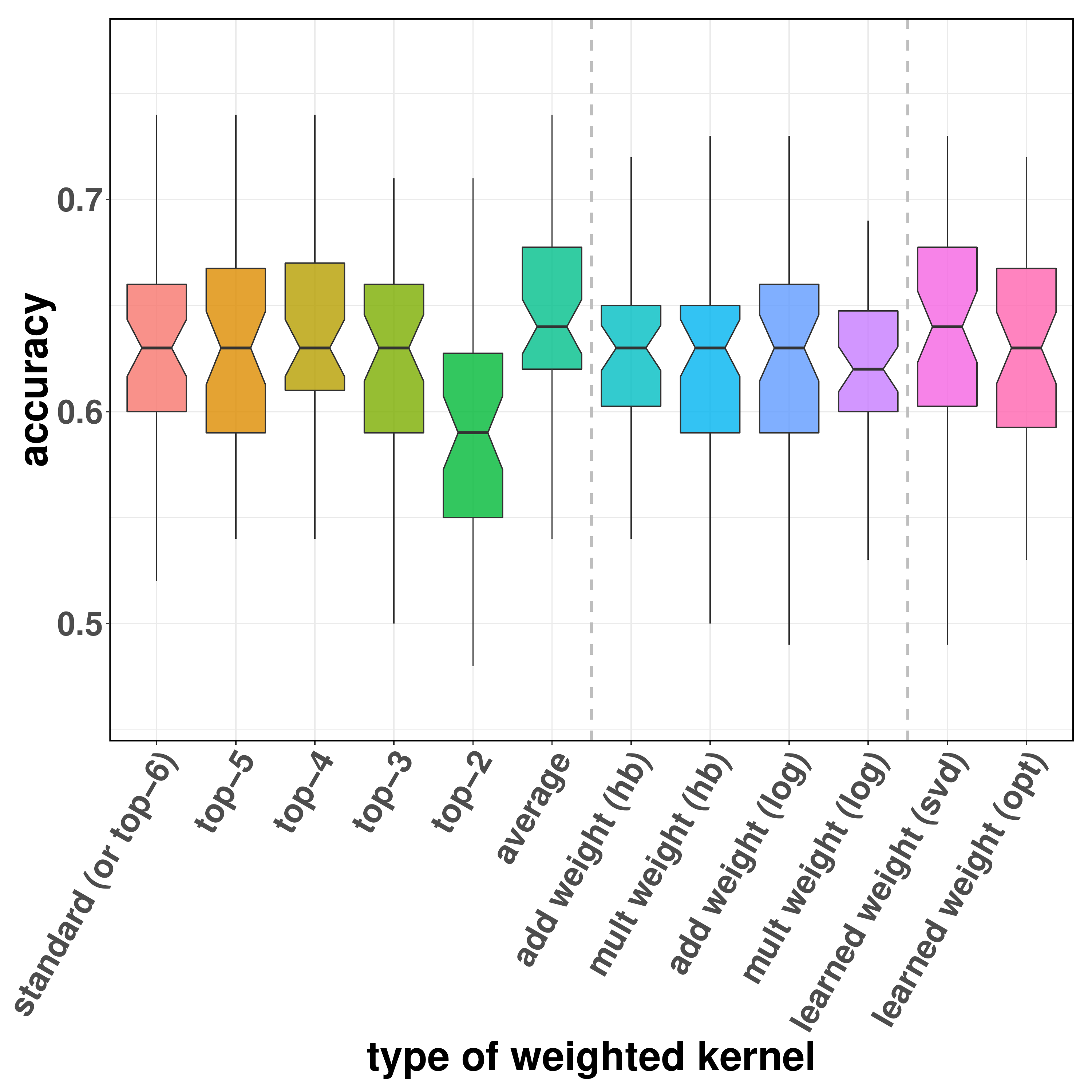}}
\caption{Boxplot of classification accuracy across $50$ random experiments using kernel SVM with different kernels.}
\label{fig:acc}
\end{center}
\vskip -0.3in
\end{figure}

\begin{figure}[t]
\begin{center}
\centerline{\includegraphics[width=\columnwidth]{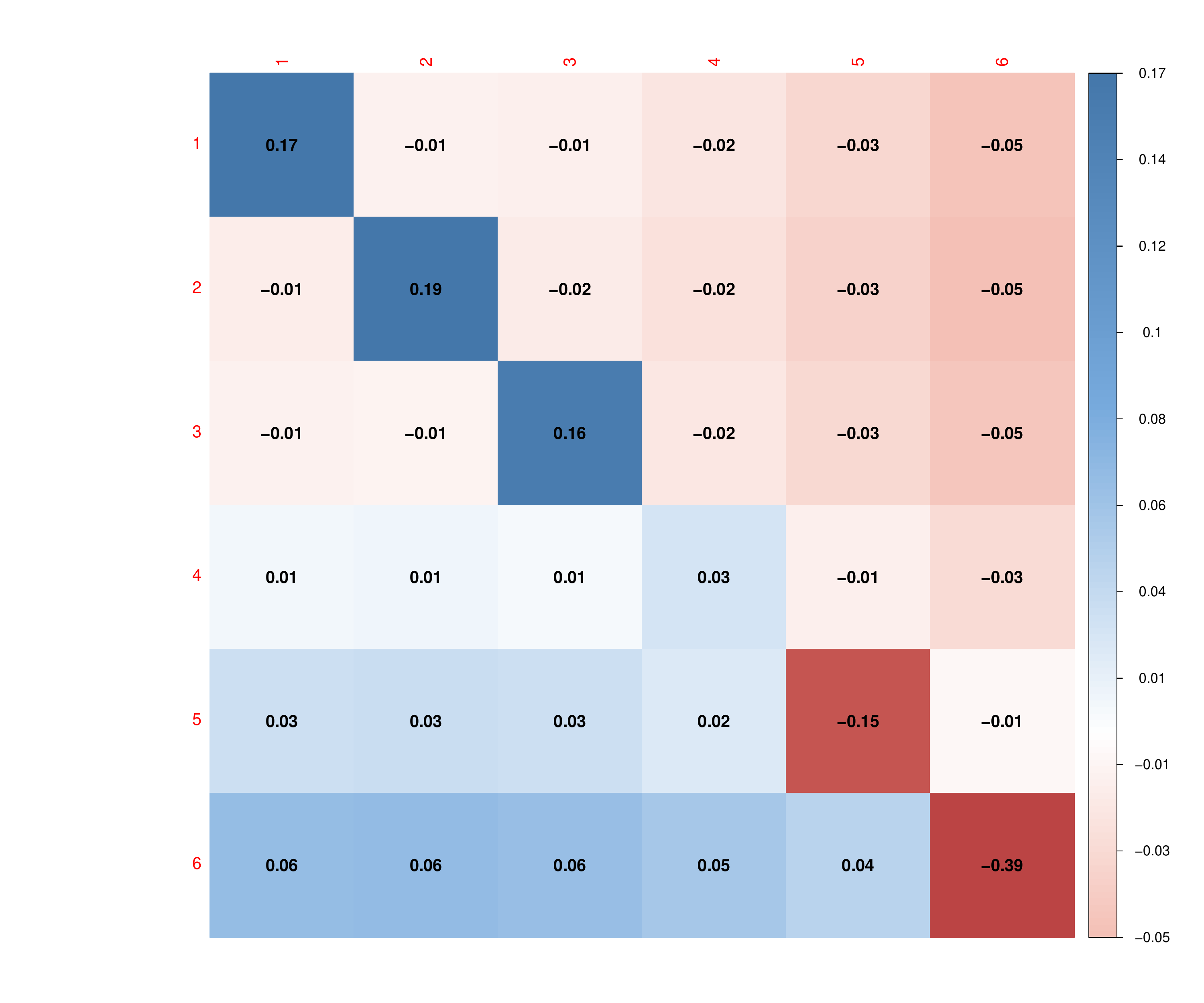}}
\caption{Weights learned via SUQUAN-SVD \citep[Algorithm 1]{LeMorvan2017Supervised} for a data-driven weighted kernel.}
\label{fig:weights}
\end{center}
\vskip -0.3in
\end{figure}

Finally, we show the weights learned via SUQUAN-SVD for a data-driven weighted kernel in Figure~\ref{fig:weights}. Recall that the entries $U_{ab}$ of a weight matrix encodes the importance of pairs of ranked positions $a$ and $b$. First, the diagonal can be seen as importance of individual positions, and we observe a generally decreasing trend of importance as the position rank increases, i.e., as items become less preferred. This is in line with the pattern widely recognized in such data involved in preference ranking. Second, the matrix presents a near skew-symmetric pattern, where the upper triangle contains mostly negative importance and the lower triangle mostly positive. This suggests that shifting the relative order of $\sigma(i) > \sigma(j)$ or $\sigma(i) < \sigma(j)$ results in opposite direction of contribution in evaluating the kernel $G_{U}$ (\ref{eq:gu}). Finally, we observe that, the more we move away from diagonal, the larger the magnitude of the values, which indicates that it is crucial to focus more on pairs of items whose ranks are more distinctively placed by a permutation. This pattern of gradient on pairwise position importance in $U$ identifies our foremost motivation of proposing the weighted kernels.

\section{Discussion}
We have proposed a general framework to build computationally efficient kernels for permutations, extending the Kendall kernel to incorporate weights and higher-order information, and showing how weights can be systematically optimized in a data-driven way. The price to pay for these extensions is that the dimensionality of the embedding and the number of free parameters of the model can quickly increase, raising computational and statistical challenges that could be addressed in future work. On the theoretical side, the kernels we proposed could naturally be analyzed in the Fourier domain \citep{Kondor2010Ranking, Mania2016Universality} which may lead to new interpretations of their regularization properties and reproducing kernel Hilbert spaces \citep{Berg1984Harmonic}.

\bibliography{mybib}
\bibliographystyle{plainnat}

\end{document}